\def\figref#1{figure~\ref{#1}}
\def\secref#1{section~\ref{#1}}
\def\eqref#1{equation~\ref{#1}}
\def\algref#1{algorithm~\ref{#1}}
\def\1{\bm{1}}
\DeclareMathAlphabet{\mathsfit}{\encodingdefault}{\sfdefault}{m}{sl}
\SetMathAlphabet{\mathsfit}{bold}{\encodingdefault}{\sfdefault}{bx}{n}
\DeclareMathOperator*{\argmin}{arg\,min}
\renewcommand{\secref}[1]{Section~\ref{#1}}
\renewcommand{\eqref}[1]{Equation~\ref{#1}}
\renewcommand{\figref}[1]{Figure~\ref{#1}}
\newcommand{\tabref}[1]{Table~\ref{#1}}
\renewcommand{\algref}[1]{Algorithm~\ref{#1}}
\newcommand{\lemmaref}[1]{Lemma~\ref{#1}}
\newcommand{\appendixref}[1]{Appendix~\ref{#1}} 
\newcolumntype{Y}{>{\centering\arraybackslash}X}
\DeclareSIUnit\carbonEq{\kilogram CO_{\textsubscript{2}}eq}
\DeclareSIUnit\watt{W}
\DeclareSIUnit\hour{h}
\newcommand{\tmrt}{\textit{T}\textsubscript{mrt}}
\newcommand{\treetmrt}{$T_{\mathrm{mrt}}^{t}$}
\newcommand{\aggregatedtmrt}{$T_{\mathrm{mrt}}^{M,\phi}$}
\newcommand{\aggregatedmodel}{\begin{math}f_{T_{\mathrm{mrt}}^{M,\phi}}\end{math}}
\newcommand{\citylong}{\texttt{CITY}}
\newcommand{\cityshort}{\texttt{CITY}}
\newcommand{\carbonefficiency}{0.436}
\newcommand{\carbonestimation}{44}
\newtheorem{theorem}{Theorem}
\newtheorem{lemma}{Lemma}
\definecolor{cadmiumgreen}{rgb}{0.0, 0.42, 0.24}
\newcommand{\codeURL}{\url{https://anonymous.4open.science/r/tree-planting}}
\title{Climate-sensitive Urban Planning through\\Optimization of Tree Placements}
    \renewcommand{\codeURL}{\url{https://github.com/lmb-freiburg/tree-planting}}
    \renewcommand{\cityshort}{Freiburg}
    \renewcommand{\citylong}{Freiburg im Breisgau (48°00' N, 07°51' E, southwest of Germany, Baden-Württemberg)}
    \renewcommand{\carbonefficiency}{0.385}
    \renewcommand{\carbonestimation}{39}
\author{Simon Schrodi\hspace{0.4cm}Ferdinand Briegel\hspace{0.4cm}Max Argus\hspace{0.4cm}Andreas Christen\hspace{0.4cm}Thomas Brox \\
  University of Freiburg\\
  \texttt{$\{\text{schrodi,argusm,brox}\}$@cs.uni-freiburg.de}\\\texttt{$\{\text{ferdinand.briegel,andreas.christen}\}$@meteo.uni-freiburg.de}
}
\begin{document}

\maketitle

\begin{abstract}
Climate change is increasing the intensity and frequency of many extreme weather events, including heatwaves, which results in increased thermal discomfort and mortality rates. While global mitigation action is undoubtedly necessary, so is climate adaptation, e.g., through climate-sensitive urban planning. Among the most promising strategies is harnessing the benefits of urban trees in shading and cooling pedestrian-level environments. Our work investigates the challenge of optimal placement of such trees. Physical simulations can estimate the radiative and thermal impact of trees on human thermal comfort but induce high computational costs. This rules out optimization of tree placements over large areas and considering effects over longer time scales. Hence, we employ neural networks to simulate the point-wise mean radiant temperatures--a driving factor of outdoor human thermal comfort--across various time scales, spanning from daily variations to extended time scales of heatwave events and even decades. To optimize tree placements, we harness the innate local effect of trees within the iterated local search framework with tailored adaptations. We show the efficacy of our approach across a wide spectrum of study areas and time scales. We believe that our approach is a step towards empowering decision-makers, urban designers and planners to proactively and effectively assess the potential of urban trees to mitigate heat stress.
\end{abstract}

\section{Introduction}
\begin{figure}[t]
    \centering
    \begin{subfigure}[b]{\linewidth}
         \centering
         \includegraphics[trim=0 8 0 5, clip]{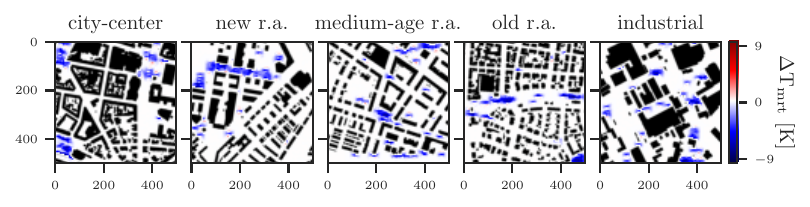}
         \vspace{-0.4em}
         \caption{Hottest \texttt{day} in 2020.\vspace{.3em}}
         \label{fig:main_day}
     \end{subfigure}
     \begin{subfigure}[b]{\linewidth}
         \centering
         \includegraphics[trim=0 8 0 18, clip]{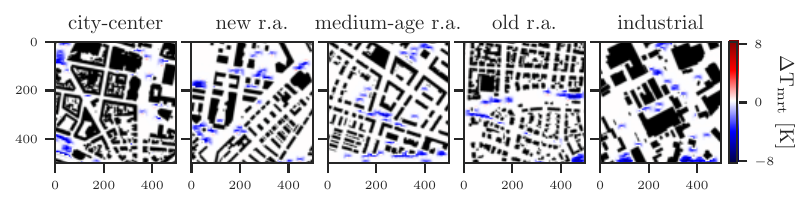}
        \vspace{-.4em}
         \caption{Hottest \texttt{week} in 2020 (heatwave condition).\vspace{.3em}}
         \label{fig:main_week}
     \end{subfigure}
     \begin{subfigure}[b]{\linewidth}
         \centering
         \includegraphics[trim=0 8 0 18, clip]{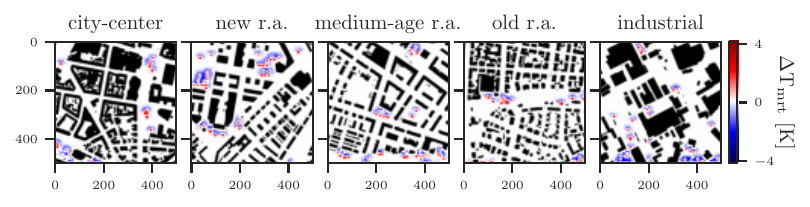}
        \vspace{-.4em}
         \caption{\texttt{year} 2020.\vspace{.3em}}
         \label{fig:main_year}
     \end{subfigure}
     \begin{subfigure}[b]{\linewidth}
         \centering
         \includegraphics[trim=0 8 0 18, clip]{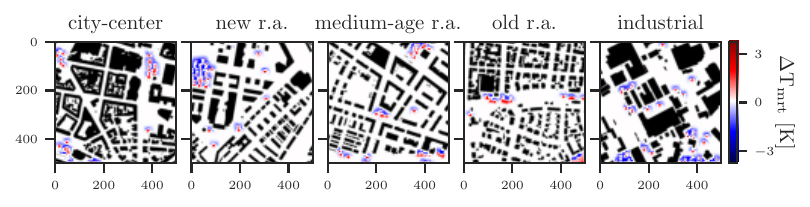}
        \vspace{-.4em}
         \caption{\texttt{decade} from 2011 to 2020.\vspace{.3em}}
         \label{fig:main_decade}
     \end{subfigure}
     \vspace{-2em}
    \caption{Optimizing tree placements can substantially reduce point-wise \tmrt, \eg, during heatwaves, leading to improved outdoor human thermal comfort. Optimized placements of 50 added trees, each with a height of \SI{12}{\meter} and crown diameter of \SI{9}{\meter}, for the hottest \texttt{day} (\ref{fig:main_day}) and hottest \texttt{week} in 2020 (\ref{fig:main_week}, the entire \texttt{year} 2020 (\ref{fig:main_year}), and the entire \texttt{decade} from 2011 to 2020 (\ref{fig:main_decade}) across diverse urban neighborhoods (from left to right: \texttt{city-center}, recently developed \texttt{new r.a.} (residential area), \texttt{medium-age r.a.}, \texttt{old r.a.}, \texttt{industrial} area).}
    \label{fig:main_results}
    \vspace{-1.5em}
\end{figure}
Climate change will have profound implications on many aspects of our lives, ranging from the quality of outdoor environments and biodiversity, to the safety and well-being of the human populace \citep{united2023causes}. Particularly noteworthy is the observation that densely populated urban regions, typically characterized by high levels of built and sealed surfaces, face an elevated exposure and vulnerability to heat stress, which in turn raises the risk of mortality during heatwaves \citep{gabriel2011urban}.
The mean radiant temperature (\tmrt, °C) is one of the main factors affecting daytime outdoor human thermal comfort \citep{holst2011impacts,kantor2011most,cohen2012daily}.\footnote{\tmrt~is introduced in more detail in \appendixref{app:tmrt}.}
High \tmrt~can negatively affect human health
\citep{mayer2008human} and \tmrt~has a higher correlation with mortality than air temperature \citep{thorsson2014mean}. 
Consequently, climate-sensitive urban planning should try to lower maximum \tmrt~as a suitable climate adaption strategy to enhance (or at least maintain) current levels of outdoor human thermal comfort.

Among the array of climate adaption strategies considered for mitigation of adverse urban thermal conditions, urban greening, specifically urban trees, have garnered significant attention due to their numerous benefits, including a reduction of \tmrt, transpirative cooling, air quality \citep{nowak2006air}, and aesthetic appeal \citep{lindemann2016does}. Empirical findings from physical models have affirmed the efficacy of urban tree canopies in improving pedestrian-level outdoor human thermal comfort in cities \citep{de2015effect,lee2016contribution,chafer2022mobile}. In particular, previous studies found the strong influence of tree positions \citep{zhao2018impact,abdi2020impact,lee2020impact}. Correspondingly, other work has studied the optimization of tree placements, deploying a wide spectrum of algorithms, such as evolutionary, greedy, or hill climbing algorithms \citep{chen2008study,ooka2008study,zhao2017tree,stojakovic2020influence,wallenberg2022locating}. However, these works were limited by the computational cost of physical models, which rendered the optimization of tree placements over large areas or long time scales infeasible.

Recently, there has been increased interest in applications of machine learning in climate science \citep{rolnick2022tackling}. For example, \citet{briegel2023modelling} and \citet{huang2023compressing} improved the computational efficiency of modeling and data access, respectively. Other works sought to raise awareness \citep{schmidt2022climategan}, studied the perceptual response to urban appearance \citep{dubey2016deep}, or harnessed machine learning as a means to augment analytical capabilities in climate science (\eg, \citet{albert2017using,blanchard2022multi,teng2023bird,otness2023data}). Besides these, several works used generative image models or reinforcement learning for urban planning, \eg, land-use layout \citep{shen2020machine,wang2020reimagining,wang2021deep,wang2023human,zheng2023spatial}.
Our work deviates from these prior works, as it directly optimizes a meteorological quantity (\tmrt) that correlates well with heat stress experienced by humans (outdoor human thermal comfort).

In this work, we present a simple, scalable yet effective optimization approach for positioning trees in urban environments to facilitate \emph{proactive climate-sensitive  planning} to adapt to climate change in cities.\footnote{Code is available at \codeURL.} We harness the iterated local search framework \citep{lourencco2003iterated,lourencco2019iterated} with tailored adaptations. This allows us to efficiently explore the solution space by leveraging the inherently local influence of individual trees to iteratively refine their placements. We initialize the search with a simple greedy heuristic. Subsequently, we alternately perturb the current best tree placements with a genetic algorithm \citep{srinivas1994genetic} and refine them with a hill climbing algorithm.

To facilitate fast optimization, we use a U-Net \citep{ronneberger2015unet} as a computational shortcut to model point-wise \tmrt~from spatio-temporal input data, inspired by \citet{briegel2023modelling}. However, the computational burden for computing aggregated, point-wise \aggregatedtmrt~with aggregation function \begin{math}\phi\end{math}, \eg, mean, over long time periods \begin{math}M\end{math} with \begin{math}|M|\end{math} meteorological (temporal) inputs is formidable, since we would need to predict point-wise \tmrt~for all meteorological inputs and then aggregate them. To overcome this, we propose to instead learn a U-Net model that directly estimates the aggregated, point-wise \aggregatedtmrt, effectively reducing computational complexity by a factor of \begin{math}\mathcal{O}(|M|)\end{math}.
Lastly, we account for changes in the vegetation caused by the positioning of the trees, represented in the digital surface model for vegetation, by updating depending spatial inputs, such as the sky view factor maps for vegetation. Since conventional protocols are computationally intensive, we learn an U-Net to estimate the sky view factor maps from the digital surface model for vegetation.

Our evaluation shows the efficacy of our optimization of tree placements as a means to improve outdoor human thermal comfort by decreasing point-wise \tmrt~over various time periods and study areas, \eg, see \figref{fig:main_results}. The direct estimation of aggregated, point-wise \aggregatedtmrt~yields substantial speed-ups by up to 400,000x. This allows for optimization over extended time scales, including factors such as seasonal dynamics, within large neighborhoods (\SI{500}{\meter}~x~\SI{500}{\meter} at a spatial resolution of \SI{1}{\meter}).
Further, we find that trees' efficacy is affected by both daily and seasonal variation, suggesting a dual influence.
In an experiment optimizing the placements of existing trees, we found that alternative tree placements would have reduced the total number of hours with \tmrt~$>$\SI{60}{\celsius}--a recognized threshold for heat stress \citep{lee2013modification,thorsson2017present}--during a heatwave event by a substantial \SI{19.7}{\percent}.
Collectively, our results highlight the potential of our method for climate-sensitive urban planning to \emph{empower decision-makers in effectively adapting cities to climate change}.
\section{Data}\label{sec:data}
Our study focuses on the city of \citylong.\iftoggle{iclrsubmission}{
\footnote{The placeholder ensures double blindness and will be replaced upon acceptance.}
}{}
Following \citet{briegel2023modelling}, we used spatial (geometric) and temporal (meteorological) inputs to model point-wise \tmrt. 
The spatial inputs include: digital elevation model; digital surface models with heights of ground and buildings, as well as vegetation; land cover class map; wall aspect and height; and sky view factor maps for buildings and vegetation. Spatial inputs are of a size of \SI{500}{\meter}~x~\SI{500}{\meter} with a resolution of \SI{1}{\meter}.
Raw LIDAR and building outline (derived from CityGML with detail level of 1) data were provided by the \citet{freiburg20183D,freiburg2021Lidar} and pre-processed spatial data were provided by \citet{briegel2023modelling}.
We used air temperature, wind speed, wind direction, incoming shortwave radiation, precipitation, relative humidity, barometric pressure, solar elevation angle, and solar azimuth angle as temporally varying meteorological inputs.
We used past hourly measurements for training and hourly ERA5 reanalysis data \citep{hersbach2020era5} for optimization.
\appendixref{sec:data_further} provides more details and examples.
\section{Methods}
We consider a function \begin{math}f_{\tmrt}(s,\;m)\end{math} to model point-wise \tmrt\begin{math}\in\mathbb{R}^{h\times w}\end{math} of a spatial resolution of \begin{math}h\times w\end{math}. It can be either a physical or machine learning model and operates on a composite input space of spatial \begin{math}s=[s_v,\, s_{\neg v}]\in\mathbb{R}^{|S|\times h\times w}\end{math} and meteorological inputs \begin{math}m\in M\end{math} from time period \begin{math}M\end{math}, \eg, heatwave event. The spatial inputs \begin{math}S\end{math} consist of vegetation-related \begin{math}s_v\end{math} (digital surface model for vegetation, sky view factor maps induced by vegetation) and non-vegetation-related spatial inputs \begin{math}s_{\neg v}\end{math} (digital surface model for buildings, digital elevation model, land cover class map, wall aspect and height, sky view factor maps induced by buildings).
Vegetation-related spatial inputs \begin{math}s_v\end{math} are further induced by the positions \begin{math}T_p\in \mathbb{N}^{k\times h\times w}\end{math} and geometry \begin{math}t_g\end{math} of \begin{math}k\end{math} trees by function \begin{math}f_v(t_p,\; t_g)\end{math}. During optimization we simply modify the digital surface model for vegetation and update depending spatial inputs accordingly (see \secref{sub:veg_to_svf}).
To enhance outdoor human thermal comfort, we want to minimize the aggregated, point-wise \aggregatedtmrt\begin{math}\in\mathbb{R}^{h\times w}\end{math} for a given aggregation function \begin{math}\phi\end{math}, \eg, mean, and time period \begin{math}M\end{math} by seeking the tree positions
\begin{equation}\label{eq:optimization}
    t_p^*\in\argmin\limits_{t_p'} \phi(\{f_{\tmrt}([f_v(t_p',\;t_g),\;s_{\neg v}],\;m)~|~\forall m\in M\})\qquad,
\end{equation}
in the urban landscape, where we keep tree geometry \begin{math}t_g\end{math} fixed for the sake of simplicity.

Numerous prior works \citep{chen2008study,ooka2008study,zhao2017tree,stojakovic2020influence,wallenberg2022locating} have tackled above optimization problem. Nevertheless, these studies were encumbered by formidable computational burdens caused by the computation of \tmrt~with conventional (slow) physical models, rendering them impractical for applications to more expansive urban areas or extended time scales, \eg, multiple days of a heatwave event. In this work, we present both an effective optimization method based on the iterated local search framework \citep{lourencco2003iterated,lourencco2019iterated} (\secref{sub:search_method}, see \algref{alg:search_method} for pseudocode), and a fast and scalable approach for modeling \tmrt~over long time periods (\secref{sub:aggregated} \& \ref{sub:veg_to_svf}, see \figref{fig:tmrt_modeling} for an illustration).

\subsection{Optimization of tree placements}\label{sub:search_method}
\begin{algorithm}[t]
\begin{algorithmic}[1]
\STATE \textbf{Input:} $\Delta$\treetmrt, number of trees $k$, number of iterations \begin{math}I\end{math}, local optima buffer size $b$
\STATE \textbf{Output:} best found tree $s_*$ in $S_*$
\STATE $s_*\leftarrow$ \texttt{TopK}($\Delta T_{\mathrm{mrt}}^t,\;k$)\COMMENT{Initialization}
\FOR{\begin{math}i=1,\; \dots,\; I\end{math}}
    \STATE $s'\leftarrow$ \texttt{PerturbationWithGA}($S_*$, $\Delta$\treetmrt)\COMMENT{Perturbation}
    \STATE $s'_*\leftarrow$ \texttt{HillClimbing}($s'$)\COMMENT{Local search}
    \STATE $S_*\leftarrow$ \texttt{TopK}($S_*\cup s'_*,\;b$)\COMMENT{Acceptance criterion~}
\ENDFOR
\end{algorithmic}
\caption{Iterated local search to find the best tree positions.}\label{alg:search_method}
\end{algorithm}
To search tree placements, we adopted the iterated local search framework from \citet{lourencco2003iterated,lourencco2019iterated} with tailored adaptations to leverage that the effectiveness of trees is bound to a local neighborhood.
The core principle of iterated local search is the iterative refinement of the current local optimum through the alternation of perturbation and local search procedures.
We initialize the first local optimum by a simple greedy heuristic. Specifically, we compute the difference in \tmrt~(\begin{math}\Delta\end{math}\treetmrt) resulting from the presence or absence of a single tree at every possible position on the spatial grid. Subsequently, we greedily select the positions based on the maximal \begin{math}\Delta\end{math}\treetmrt~(\texttt{TopK}).
During the iterative refinement, we perturb the current local optimum using a genetic algorithm \citep{srinivas1994genetic} (\texttt{PerturbationWithGA}). The initial population of the genetic algorithm comprises the current best (local) optima--we keep track of the five best optima--and randomly generated placements based on a sampling probability of
\begin{equation}\label{eq:softmax}
    p_{\Delta T_{\mathrm{mrt}}^t}=\frac{\exp{\Delta T_{\mathrm{mrt}_{i,j}}^t/\tau}}{\sum\limits_{i,j}\exp{\Delta T_{\mathrm{mrt}_{i,j}}^t/\tau}}\qquad,
\end{equation}
where the temperature \begin{math}\tau\end{math} governs the entropy of \begin{math}p_{\Delta T_{\mathrm{mrt}}^t}\end{math}.
Subsequently, we refine the perturbed tree positions from the genetic algorithm with the hill climbing algorithm (\texttt{HillClimbing}), similar to \citet{wallenberg2022locating}. If the candidate \begin{math}s'_*\end{math} improves upon our current optima \begin{math}S_*\end{math}, we accept and add it to our history of local optima \begin{math}S_*\end{math}.
Throughout the search, we ensure that trees are not placed on buildings nor water, and trees have no overlapping canopies.
\algref{alg:search_method} provides pseudocode.

\paragraph{Theoretical analysis}
It is easy to show that our optimization method finds the optimal tree placements given an unbounded number of iterations and sufficiently good \tmrt~modeling.
\begin{lemma}[$p_{\Delta T_{\mathrm{mrt}_{i,j}}^t}>0$]\label{lemma:positive_prob}
The probability for all possible tree positions $(i,j)$ is $p_{\Delta T_{\mathrm{mrt}_{i,j}}^t}>0$.
\end{lemma}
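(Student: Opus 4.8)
The plan is to reduce the claim to the elementary fact that the exponential function is strictly positive on all of $\mathbb{R}$. The quantity $p_{\Delta T_{\mathrm{mrt}_{i,j}}^t}$ is defined in Equation~\ref{eq:softmax} as a softmax over the finite set of admissible grid positions, so it is a ratio of a single exponential term and a finite sum of exponential terms. I would therefore establish the positivity of the numerator and of the denominator separately, and then conclude that their quotient is positive.

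First I would observe that for any fixed position $(i,j)$ the per-cell value $\Delta T_{\mathrm{mrt}_{i,j}}^t$ is a real number (a difference of two finite mean radiant temperatures), and that the temperature parameter $\tau$ is a fixed positive constant, so the argument $\Delta T_{\mathrm{mrt}_{i,j}}^t/\tau$ is finite. Since $\exp(x)>0$ for every $x\in\mathbb{R}$, the numerator $\exp(\Delta T_{\mathrm{mrt}_{i,j}}^t/\tau)$ is strictly positive. Second, the denominator $\sum_{i,j}\exp(\Delta T_{\mathrm{mrt}_{i,j}}^t/\tau)$ is a finite sum of such strictly positive terms, hence strictly positive as well (in particular nonzero, so the ratio is well defined). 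A quotient of two strictly positive reals is strictly positive, giving $p_{\Delta T_{\mathrm{mrt}_{i,j}}^t}>0$ for every admissible $(i,j)$, as claimed.

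Honestly, there is no substantive obstacle here: the statement is immediate from the softmax form together with the strict positivity of $\exp$. The only point requiring a (trivial) check is that the setup guarantees $\tau>0$ and that every $\Delta T_{\mathrm{mrt}_{i,j}}^t$ is finite, so that no argument of the exponential diverges to $-\infty$ and collapses a probability to zero. Under these standing assumptions the lemma follows in one line. Its real role is downstream: positivity of every sampling probability is the key ingredient ensuring that the genetic perturbation step can reach any placement, so that the iterated local search explores the entire solution space and attains the optimum in the limit of unboundedly many iterations.
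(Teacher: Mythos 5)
Your argument is correct and is essentially identical to the paper's own one-line proof, which likewise observes that the exponential in Equation~\ref{eq:softmax} is strictly positive, so both the numerator and the (finite, nonzero) denominator are positive and the quotient is well defined and positive. Your additional remarks on the finiteness of $\Delta T_{\mathrm{mrt}_{i,j}}^t$ and the positivity of $\tau$ are sensible but do not change the substance of the argument.
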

\begin{proof}
    Since the exponential function $\exp$ in \eqref{eq:softmax} is always positive, it follows that $p_{\Delta T_{\mathrm{mrt}_{i,j}}^t}>0$ and the denominator is always non-zero. Thus, the probabilities are well-defined.
\end{proof}
\begin{theorem}[Convergence to global optimum]\label{th:convergence}
Our optimization method (\algref{alg:search_method}) converges to the globally optimal tree positions as (i) the number of iterations approaches infinity and (ii) the estimates of our \tmrt~modeling (\secref{sub:aggregated} \& \ref{sub:veg_to_svf}) are proportional to the true aggregated, point-wise \aggregatedtmrt~for an aggregation function \begin{math}\phi\end{math} and time period \begin{math}M\end{math}.
\end{theorem}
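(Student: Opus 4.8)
The plan is to combine four ingredients: the finiteness of the feasible search space, the proportionality assumption~(ii), the strict positivity of the sampling probabilities from \lemmaref{lemma:positive_prob}, and the monotonicity of the \texttt{TopK} acceptance rule, in order to conclude almost-sure convergence as the iteration count grows.

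First I would observe that the set of admissible tree configurations is finite: trees live on the discrete $h\times w$ grid, must avoid buildings and water, and may not have overlapping canopies, so there are only finitely many placements of $k$ trees. Hence a globally optimal configuration $t_p^*$ exists and is attained. Next I would use assumption~(ii) to reduce the problem to the learned surrogate: if the estimate is proportional to the true aggregated $T_{\mathrm{mrt}}^{M,\phi}$ with a fixed positive multiplicative constant, then for every configuration the estimated and true objectives differ only by that constant, so their $\argmin$ sets coincide. Minimizing the surrogate therefore minimizes the true objective, and it suffices to show the algorithm locates the surrogate optimum.

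The core of the argument is a geometric-tail (Borel--Cantelli) bound. I would argue that in each iteration the \texttt{PerturbationWithGA} step produces, among its randomly sampled placements, the optimal configuration $t_p^*$ with some probability $p_*>0$. This rests on \lemmaref{lemma:positive_prob}: each admissible position has strictly positive softmax sampling probability, so the feasible joint configuration $t_p^*$ is generated with a positive probability $p_*$ that is bounded below uniformly across iterations. Consequently the probability that $t_p^*$ has not appeared after $I$ iterations is at most $(1-p_*)^I\to 0$ as $I\to\infty$, so $t_p^*$ is sampled at some iteration with probability $1$.

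Finally I would invoke the acceptance rule to upgrade visitation to convergence. Because \texttt{HillClimbing} only moves to configurations that do not worsen the objective, and because the acceptance step retains the best $b$ configurations seen so far via \texttt{TopK}, once $t_p^*$ (or any configuration matching its optimal value) enters $S_*$ it can never be displaced by a strictly worse configuration. Hence the best element of $S_*$ equals the global optimum for all sufficiently large $i$, which is the claimed convergence. The main obstacle is making the positivity claim $p_*>0$ fully rigorous: one must check that the optimal configuration is itself feasible under the no-overlap and forbidden-cell constraints, and that the masked/sequential sampling used inside the genetic algorithm still assigns $t_p^*$ positive joint probability, i.e.\ that conditioning on the constraints never zeroes out a coordinate of $t_p^*$. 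Everything else (finiteness, proportionality, and monotonicity of \texttt{TopK}) is routine.
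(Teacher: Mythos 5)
Your proposal is correct and follows essentially the same route as the paper's own proof: positive sampling probability (\lemmaref{lemma:positive_prob}) guarantees the globally optimal configuration is eventually proposed by \texttt{PerturbationWithGA}, the proportionality assumption makes the surrogate comparisons faithful to the true objective, and the \texttt{TopK} acceptance retains the optimum once found. Your version is in fact more careful than the paper's two-sentence argument --- the explicit finiteness and geometric-tail bound, and especially your closing caveat that the no-overlap and forbidden-cell constraints must not zero out the \emph{joint} probability of the optimal configuration, address a point that \lemmaref{lemma:positive_prob} (which only establishes positivity per individual position) leaves open.
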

\begin{proof}
We are guaranteed to sample the globally optimal tree positions with an infinite budget (assumption (i)), as the perturbation step in our optimization method (\texttt{PerturbationWithGA}) randomly interleaves tree positions with positive probability (\lemmaref{lemma:positive_prob}). Since our optimization method directly compares the effectiveness of tree positions using our \aggregatedtmrt~modeling pipeline--that yields estimates that are proportional to true \aggregatedtmrt~values (assumption (ii))--we will accept them throughout all steps of our optimization method and, consequently, find the global optimum.
\end{proof}

\subsection{Aggregated, point-wise mean radiant temperature modeling}\label{sub:aggregated}
\begin{figure}[t]
    \centering
    \includegraphics{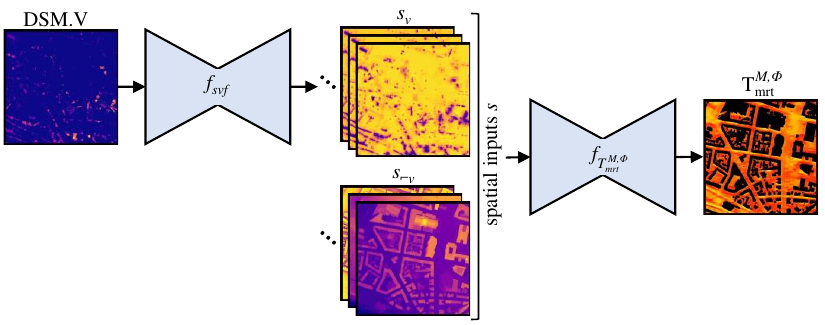}
    \caption{Overview of \aggregatedtmrt~modeling. To account for changes in vegetation during optimization, we modify the digital surface model for vegetation (DSM.V) and update depending spatial inputs (sky view factor maps for vegetation) with the model \begin{math}f_{\mathrm{svf}}\end{math}. The model \aggregatedmodel~
    takes these updated vegetation-related \begin{math}s_v\end{math} and non-vegetation-related spatial inputs \begin{math}s_{\neg v}\end{math} to estimate the aggregated, point-wise \aggregatedtmrt~for a given aggregation function \begin{math}\phi\end{math}, \eg, mean, and time period \begin{math}M\end{math}, \eg, heatwave event.
    }
    \label{fig:tmrt_modeling}
\end{figure}
Above optimization procedure is zero-order and, thus, requires fast evaluations of \tmrt~to be computationally feasible.
Recently, \citet{briegel2023modelling} employed a U-Net \citep{ronneberger2015unet} model \begin{math}f_{\tmrt}\end{math} to estimate point-wise \tmrt~for given spatial and meteorological inputs at a certain point in time. They trained the model on data generated by the microscale (building-resolving) SOLWEIG physical model \citep{lindberg2008solweig} (refer to \appendixref{sec:solweig} for more details on SOLWEIG).
However, our primary focus revolves around reducing aggregated, point-wise \aggregatedtmrt~for an aggregation function \begin{math}\phi\end{math}, \eg, mean, and time period \begin{math}M\end{math}, \eg, multiple days of a heatwave event. Thus, above approach would require the computation of point-wise \tmrt~for all \begin{math}|M|\end{math} meteorological inputs of the time period \begin{math}M\end{math}, followed by the aggregation with function \begin{math}\phi\end{math}.\footnote{For sake of simplicity, we assumed that the spatial input is static over the entire time period.}
However, this procedure becomes prohibitively computationally expensive for large time periods.

To mitigate this computational bottleneck, we propose to learn a U-Net model
\begin{equation}
    f_{T_{\mathrm{mrt}}^{M,\phi}}(\cdot)\approx\phi(\{f_{T_{\mathrm{mrt}}}(\cdot,m)~|~\forall m\in M\})\qquad
\end{equation}
that directly approximates aggregated, point-wise \aggregatedtmrt~for a given aggregation function \begin{math}\phi\end{math} and time period \begin{math}M\end{math}. For training data, we computed aggregated, point-wise \aggregatedtmrt~for a specified aggregation function \begin{math}\phi\end{math} and time period \begin{math}M\end{math} with aforementioned (slow) procedure. However, note that this computation has to be done only once for the generation of training data. During inference, the computational complexity is effectively reduced by a factor of \begin{math}\mathcal{O}(|M|)\end{math}.

\subsection{Mapping of tree placements to the spatial inputs}\label{sub:veg_to_svf}
During our optimization procedure (\secref{sub:search_method}), we optimize the placement of trees by directly modifying the digital surface model for vegetation that represents the trees' canopies.
However, depending spatial inputs (\ie, sky view factor maps for vegetation) cannot be directly modified and conventional procedures are computationally expensive. Hence, we propose to estimate the sky view factor maps from the digital surface model for vegetation with another U-Net model \begin{math}f_{\mathrm{svf}}\end{math}. To train this model \begin{math}f_{\mathrm{svf}}\end{math}, we repurposed the conventionally computed sky view factor maps, that were already required for computing point-wise \tmrt~with SOLWEIG (\secref{sub:aggregated}).
\section{Experimental evaluation}\label{sec:evaluation}
In this section, we evaluate our optimization approach for tree placements across diverse study areas and time periods.
We considered the following five study areas:
\texttt{city-center} an old city-center, 
\texttt{new r.a.} a recently developed residential area (r.a.) where the majority of buildings were built in the last 5 years, 
\texttt{medium-age r.a.} a medium, primarily residential  district built 25-35 years ago, 
\texttt{old r.a.} an old building district where the majority of buildings are older than 100 years, and 
\texttt{industrial} an industrial area. 
These areas vary considerably in their characteristics, \eg, existing amount of vegetation or proportion of sealed surfaces.
Further, we considered the following time periods \begin{math}M\end{math}:
hottest \texttt{day} (and \texttt{week}) in 2020 based on the (average of) maximum daily air temperature,
the entire \texttt{year} of 2020, and
the \texttt{decade} from 2011 to 2020.
While the first two time periods focus on the most extreme heat stress events, the latter two provide assessment over the course of longer time periods, including seasonal variations.
We compared our approach with \texttt{random} (positioning based on random chance), \texttt{greedy \tmrt} (maximal \tmrt), \texttt{greedy $\Delta$\tmrt} (maximal $\Delta$\tmrt), and a \texttt{genetic} algorithm.
We provide the hyperparameters of our optimization method in \appendixref{sec:opt_hyperparameters}.
Model and training details for \tmrt~and \aggregatedtmrt~estimation are provided in \appendixref{sec:modeling_details}. Throughout our experiments, we used the mean as aggregation function \begin{math}\phi\end{math}.
While all optimization algorithms used the faster direct estimation of aggregated, point-wise \aggregatedtmrt~with the model \aggregatedmodel, we evaluated the final found tree placements by first predicting point-wise \tmrt~for all \begin{math}|M|\end{math} meteorological inputs across the specified time period \begin{math}M\end{math} with the model \begin{math}f_{T_{\mathrm{mrt}}}\end{math} and subsequently aggregated these estimations.
To quantitatively assess the efficacy of tree placements, we quantified the change in point-wise \tmrt~($\Delta$\tmrt~[K]), averaged over the \SI{500}{\meter}~x~\SI{500}{\meter} study area ($\Delta$\tmrt~area\textsuperscript{-1} [Km\textsuperscript{-2}]), or averaged over the size of the canopy area ($\Delta$\tmrt~canopy area\textsuperscript{-1} [Km\textsuperscript{-2}]). We excluded building footprints and open water areas from our evaluation criteria.
Throughout our experiments, we assumed that tree placements can be considered on both public and private property.

\subsection{Evaluation of mean radiant temperature modeling}
We first assessed the quality of our \tmrt~and \aggregatedtmrt~modeling (\secref{sub:aggregated} \& \ref{sub:veg_to_svf}).
Our model for estimating point-wise \tmrt~(\begin{math}f_{\tmrt}\end{math}, \secref{sub:aggregated}) achieved a L1 error of \SI{1.93}{\kelvin} compared to the point-wise \tmrt~calculated by the physical model SOLWEIG \citep{lindberg2008solweig}. This regression performance is in line with \citet{briegel2023modelling} who reported a L1 error of \SI{2.4}{\kelvin}.
Next, we assessed our proposed model \aggregatedmodel~that estimates aggregated, point-wise \aggregatedtmrt~for aggregation function \begin{math}\phi\end{math} (\ie, mean) over a specified time period \begin{math}M\end{math} (\secref{sub:aggregated}). We found only a modest increase in L1 error by \SI{0.46}{\kelvin} (for time period \begin{math}M\end{math}=\texttt{day}), \SI{0.42}{\kelvin} (\texttt{week}), \SI{0.35}{\kelvin} (\texttt{year}), and \SI{0.18}{\kelvin} (\texttt{decade}) compared to first predicting point-wise \tmrt~for all \begin{math}M\end{math} meteorological inputs with model \begin{math}f_{\tmrt}\end{math} and then aggregating them. While model \aggregatedmodel~is slightly worse in regression performance, we want to emphasize its substantial computational speed-ups. To evaluate the computational speed-up, we used a single NVIDIA RTX 3090 GPU and averaged estimation times for \aggregatedtmrt~over five runs. We found computational speed-ups by up to 400,000x (for the time period \texttt{decade} with \begin{math}|M|=87,672\end{math} meteorological inputs).
Lastly, our estimation of sky view factors from the digital surface model for vegetation with model \begin{math}f_{\mathrm{svf}}\end{math} (\secref{sub:veg_to_svf}) achieved a mere L1 error of \SI{0.047}{\percent} when compared to conventionally computed sky view factor maps. Substituting the conventionally computed sky view factor maps with our estimates resulted in only a negligible regression performance decrease of ca. \SI{0.2}{\kelvin} compared to SOLWEIG's estimates using the conventionally computed sky view factor maps.

\subsection{Evaluation of optimization method}\label{sub:main_results}
We assessed our optimization method by searching for the positions of \begin{math}k\end{math} newly added trees. We considered uniform tree specimens with spherical crowns, tree height of \SI{12}{\meter}, canopy diameter of \SI{9}{\meter}, and trunk height of \SI{25}{\percent} of the tree height (following the default settings of SOLWEIG).

\paragraph{Results}
\begin{table}[t]
    \centering
    \caption{Quantitative results ($\Delta$\tmrt~area\textsuperscript{-1} [Km\textsuperscript{-2}]) for positioning 50 added trees of a height of \SI{12}{\meter} and canopy diameter of \SI{9}{\meter}, yielding an additional canopy area size of \SI{4050}{\meter\textsuperscript{2}} (\SI{1.62}{\percent} of each area), averaged over the five study areas.}
    \label{tab:main_results}
    \begin{tabular}{l *{4}{c}}
         \toprule
         Method & \texttt{day} & \texttt{week} & \texttt{year} & \texttt{decade} \\\midrule
         \texttt{random} & -0.12 & -0.1\textcolor{white}{0} & \textcolor{white}{-}0.01 & \textcolor{white}{-}0.02  \\
         \texttt{greedy \tmrt} & -0.18 & -0.12 & \textcolor{white}{-}0.02 & \textcolor{white}{-}0.02 \\
         \texttt{greedy $\Delta$\tmrt} & -0.22 & -0.18 & -0.02 & -0.02 \\
         \texttt{genetic} & -0.26 & -0.19 & -0.02 & -0.02 \\\midrule
         \texttt{ILS}$^\dag$ (ours) & \textbf{-0.3\textcolor{white}{0}} & \textbf{-0.23} & \textbf{-0.03} & \textbf{-0.03} \\\bottomrule
         \multicolumn{5}{c}{$^\dag$: \texttt{ILS}~=~iterated local search.}
    \end{tabular}
\end{table}
\figref{fig:main_results} illustrates the efficacy of our approach in reducing point-wise \tmrt~across diverse urban districts and time periods. We observe that trees predominantly assume positions on east-to-west aligned streets and large, often paved spaces. However, tree placement becomes more challenging with longer time scales. This observation is intricately linked to seasonal variations, as revealed by our analyses in \secref{sub:analyses}. In essence, the influence of trees on \tmrt~exhibits a duality--contributing to reductions in summer and conversely causing increases in winter. Furthermore, this dynamic also accounts for the observed variations in \tmrt~on the northern and southern sides of the trees, where decreases and increases are respectively evident.
\tabref{tab:main_results} affirms that our optimization method consistently finds better tree positions when compared against the considered baselines.

\paragraph{Ablation study}
We conducted an ablation study by selectively ablating components of our optimization method. Specifically, we studied the contributions of the greedy initialization strategy (\texttt{TopK}) by substituting it with random initialization, as well as (de)activating perturbation (\texttt{PerturbationWithGA}), local search (\texttt{HillClimbing}), or the iterative design (\texttt{Iterations}).
\tabref{tab:ablation} shows the positive effect of each component. It is noteworthy that the iterated design may exhibit a relatively diminished impact in scenarios where the greedy initialization or first iteration already yield good or even the (globally) optimal tree positions.
\begin{table}[t]
    \centering
    \caption{Ablation study over different choices of our optimization method for the time period \texttt{week} averaged across the five study areas for 50 added trees of height of \SI{12}{\meter} and crown diameter of \SI{9}{\meter}.}
    \label{tab:ablation}
    \resizebox{\linewidth}{!}{%
    \begin{tabular}{cccc|c}
    \toprule
    \texttt{TopK} & \texttt{PerturbationWithGA} & \texttt{HillClimbing} & \texttt{Iterations} & $\Delta$\tmrt~area\textsuperscript{-1} [Km\textsuperscript{-2}] \\\midrule
    \checkmark & - & - & - & -0.1793 \\\midrule
    - & \checkmark & \checkmark & \checkmark & -0.1955 \\
    \checkmark & - & \checkmark & \checkmark & -0.2094 \\
    \checkmark & \checkmark & - & \checkmark & -0.2337 \\
    \checkmark & \checkmark & \checkmark & - & -0.2302 \\\midrule
    \checkmark & \checkmark & \checkmark & \checkmark & \textbf{-0.2345} \\\bottomrule
    \end{tabular}
    }
\end{table}

\subsection{Analyses}\label{sub:analyses}
Given the found tree placements from our experiments in \secref{sub:main_results}, we conducted analyses on various aspects (daily variation, seasonal variation, number of trees, tree geometry variation).
\begin{figure}[t]
    \centering
     \includegraphics[width=\linewidth,trim=7 7 7 6,clip]{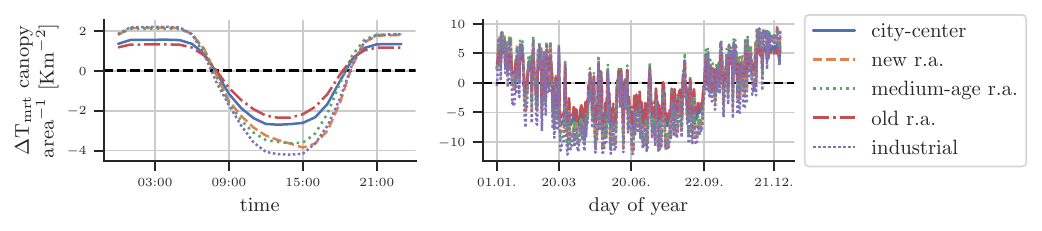}
    \caption{Daily (left) and seasonal variation (right) reduce \tmrt~during daytime and summer season, while conversely increase it during nighttime and winter season. Results based on experiments adding 50 trees, each with a height of \SI{12}{\meter} and a crown diameter of \SI{9}{\meter}, for the time period \texttt{year}.}
    \label{fig:time_analyses}
\end{figure}
\figref{fig:time_analyses} shows a noteworthy duality caused by daily and seasonal variations. Specifically, trees exert a dual influence, reducing \tmrt~during daytime and summer season, while conversely increasing it during nighttime and winter season.
To understand the impact of meteorological parameters on this, we trained an XGBoost classifier \citep{chen2015xgboost} on each study area and all meteorological inputs from 2020 (\texttt{year}) to predict whether the additional trees reduce or increase \tmrt. We assessed feature importance using SHAP \citep{shapley1953value,lundberg2017unified} and found that incoming shortwave radiation \begin{math}I_g\end{math} emerges as the most influential meteorological parameter. Remarkably, a simple classifier of the form
\begin{equation}
    y=\left\{\begin{array}{ll} \text{\tmrt~decreases}, & I_g>\text{\SI{96}{\watt\meter\textsuperscript{-2}}} \\
         \text{\tmrt~increases}, & \text{otherwise}\end{array}\right. \qquad ,
\end{equation}
achieves an average accuracy of \SI{97.9}{\percent}~\begin{math}\pm\end{math}~\SI{0.005}{\percent}, highlighting its predictive prowess.

\begin{figure}[t]
    \centering
    \includegraphics[width=\linewidth,trim=7 7 7 6,clip]{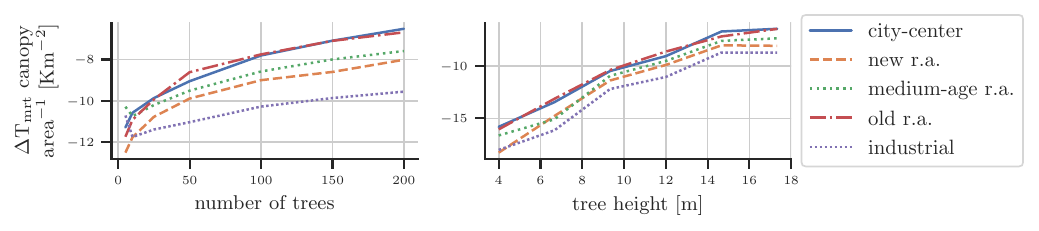}
    \caption{Increasing the number of trees (left) and tree height (right) has diminishing returns for the reduction of \tmrt. Results are based on the experiment adding trees for the time period \texttt{week}.}
    \label{fig:tree_analyses}
    \vspace{-1em}
\end{figure}
Besides the above, \figref{fig:tree_analyses} reveals a pattern of diminishing returns as we increase the extent of canopy cover, achieved either by adding more trees or by using larger trees. This trend suggests that there may be a point of saturation beyond which achieving further reductions in \tmrt~becomes progressively more challenging. To corroborate this trend quantitatively, we computed Spearman rank correlations between \begin{math}\Delta\text{\tmrt~canopy~area\textsuperscript{-1}}\end{math} and the size of the canopy area; also including pre-existing trees with a minimum height of \SI{3}{\meter}. We found high Spearman rank correlations of 0.72 or 0.73 for varying number of trees or tree heights, respectively.
Notwithstanding the presence of diminishing returns, we still emphasize that each tree leads to a palpable decrease in \tmrt, thereby enhancing outdoor human thermal comfort--an observation that remains steadfast despite these trends.

\subsection{Counterfactual placement of trees}\label{sub:counterfactual}
In our previous experiments, we always added trees to the existing urban vegetation. However, it remains uncertain whether the  placement of existing trees, determined by natural evolution or human-made planning, represents an optimal spatial arrangement of trees. Thus, we pose the counterfactual question \citep{pearl2009causality}:
\emph{could alternative tree positions have retrospectively yielded reduced amounts of heat stress}?
To answer this counterfactual question, we identified all existing trees from the digital surface model for vegetation with a simple procedure based on the watershed algorithm \citep{soille1990automated,beucher2018morphological}--which is optimal in identifying non-overlapping trees, \ie, the maximum point of the tree does not overlap with any other tree, with strictly increasing canopies towards each maximum point--and optimized their placements for the hottest \texttt{week} in 2020 (heatwave condition). 
We only considered vegetation of a minimum height of \SI{3}{\meter} and ensured that the post-extraction size of the canopy area does not exceed the size of the (f)actual canopy area.

\paragraph{Results}
We found alternative tree placements that would have led to a substantial reduction of \tmrt~by an average of \SI{0.83}{\kelvin}. Furthermore, it would have resulted in a substantial reduction of \tmrt~exceeding \SI{60}{\celsius}--a recognized threshold for heat stress \citep{lee2013modification,thorsson2017present}--by on average \SI{19.7}{\percent} throughout the duration of the heatwave event (\texttt{week}).
This strongly suggests that the existing placements of trees may not be fully harnessed to their optimal capacity.
Notably, the improvement by relocation of existing trees is significantly larger than the effect of 50 added trees (\SI{0.23}{\kelvin}; see \tabref{tab:main_results}).
\figref{fig:counterfactual} visualizes the change in \tmrt~across each hour of the hottest \texttt{week} in 2020. Intriguingly, they reveal peaks during morning and afternoon hours. By inspecting the relocations of trees (see \figref{fig:counterfactual_veg}), we found that trees tend to be relocated from spaces with already ample shading from tree canopies and buildings to large, open, typically sealed spaces without trees, such as sealed plazas or parking lots.
\begin{figure}[t]
    \centering
     \includegraphics[width=1.0\linewidth,trim=0 8 0 5,clip]{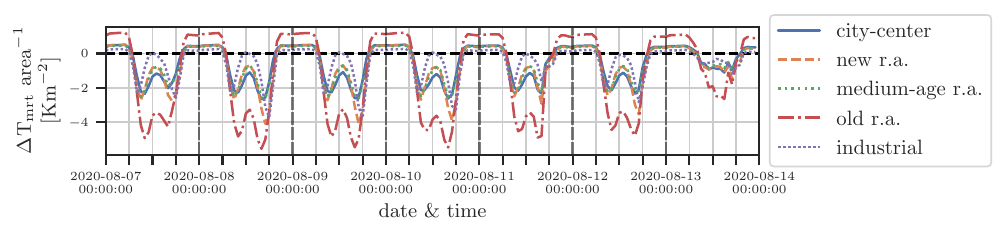}
    \caption{Alternative placements of existing trees substantially reduces \tmrt~during daytime. Optimization ran for the hottest \texttt{week} in 2020 (heatwave condition).}
    \label{fig:counterfactual}
    \vspace{-0.9em}
\end{figure}
\section{Limitations}
The main limitation, or strength, of our approach is assumption (ii) from Theorem \ref{th:convergence} that the model \aggregatedmodel~yields estimates for that are (at least) proportional to the true aggretated, point-wise \aggregatedtmrt~for aggregation function \begin{math}\phi\end{math} and time period \begin{math}M\end{math}. Our experimental evaluation affirms the viability of this approximation, but it remains an assumption.
Another limitation is that we assumed a static urban environment, contrasting the dynamic real world.
Further, we acknowledge the uniform tree parameterization, \ie, same tree geometry, species, or transmissivity. While varying tree geometry could be explored further in future works, both latter are limitations of SOLWEIG, which we rely on to train our models.
In a similar vein, our experiments focused on a single city, which may not fully encompass the diversity of cities worldwide. We believe that easier data acquisition of spatial input data, \eg, through advances in canopy and building height estimation \citep{lindemann2016does,tolan2023sub}, could facilitate the adoption of our approach to other cities.
Further, our experiments lack a distinction between public and private property, as well as does not incorporate considerations regarding the actual ecological and regulatory feasibility of tree positions, \eg, trees may be placed in the middle of streets. Lastly, our approach does not consider the actual zones of activity and pathways of pedestrians. Future work could address these limitations by incorporating comprehensive data regarding the feasibility, cost of tree placements and pedestrian pathways, with insights from, \eg, urban forestry or legal experts, as well as considering the point-wise likelihood of humans sojourning at a certain location. Finally, other factors, such as wind, air temperature, and humidity, also influence human thermal comfort, however vary less distinctly spatially and leave the integration of such for future work.

\section{Conclusion}
We presented a simple and scalable method to optimize tree locations across large urban areas and time scales to mitigate pedestrian-level heat stress by optimizing human thermal comfort expressed by \tmrt. We proposed a novel approach to efficiently model aggregated, point-wise \aggregatedtmrt~for a specified aggregation function and time period, and optimized tree placements through an instantiation of the iterated local search framework with tailored adaptations. Our experimental results corroborate the efficacy of our approach. Interestingly, we found that the existing tree stock is not harnessed to its optimal capacity. Furthermore, we unveiled nuanced temporal effects, with trees exhibiting distinct decreasing or increasing effects on \tmrt~during day- and nighttime, as well as across summer and winter season.
Future work could scale our experiments to entire cities, explore different aggregation functions \eg, top \SI{5}{\percent} of the most extreme heat events, integrate density maps of pedestrians, or optimize other spatial inputs, \eg, land cover usage.

\subsection*{Broader impact}
We believe that our approach can empower urban decision-makers selecting effective measures for climate-sensitive urban planning and climate adapation, reduces power consumption, and democratizes access to planning tools to smaller communities as well as citizens. 
However, our approach could also be used improperly for urban planning by ignoring other important factors, such as the influence of trees on wind patterns, heavy rain events, or legal requirements. Moreover, adverse individuals may manipulate results to further their personal goals, \eg, they do not want trees in front of their homes, which may not necessarily align with societal goals.

\ifboolexpr{not togl {iclrsubmission}}{
\subsubsection*{Author Contributions}
\small
Project idea: S.S. \& T.B.;
project lead: S.S.;
data curation: F.B.;
conceptualization: S.S. with input from F.B., A.C. \& T.B.;
implementation \& visualization: S.S.;
experimental evaluation \& interpreting findings: S.S. with input from F.B. \& M.A.;
guidance, feedback \& funding acquisition: A.C. \& T.B.;
paper writing: S.S. crafted the original draft and all authors contributed to the final version.

\subsubsection*{Acknowledgments}
\small
This research was funded by the Bundesministerium für Umwelt, Naturschutz, nukleare Sicherheit und Verbraucherschutz (BMUV, German Federal Ministry for the Environment, Nature Conservation, Nuclear Safety and Consumer Protection) based on a resolution of the German Bundestag (67KI2029A) and
the Deutsche Forschungsgemeinschaft (DFG, German Research Foundation) under grant number 417962828.
Raw spatial data (digital elevation model \& digital surface models) were provided by the administration of the city of \cityshort.
\normalsize
}

\ifboolexpr{togl {iclrsubmission}}{
\bibliography{iclr2024_conference,iclr2024_conference_anon}
}
{
\bibliography{iclr2024_conference,iclr2024_conference_deanon}

\begin{thebibliography}{57}
\providecommand{\natexlab}[1]{#1}
\providecommand{\url}[1]{\texttt{#1}}
\expandafter\ifx\csname urlstyle\endcsname\relax
  \providecommand{\doi}[1]{doi: #1}\else
  \providecommand{\doi}{doi: \begingroup \urlstyle{rm}\Url}\fi

\bibitem[Abdi et~al.(2020)Abdi, Hami, and Zarehaghi]{abdi2020impact}
Babak Abdi, Ahmad Hami, and Davoud Zarehaghi.
\newblock Impact of small-scale tree planting patterns on outdoor cooling and
  thermal comfort.
\newblock \emph{Sustainable Cities and Society}, 2020.

\bibitem[Albert et~al.(2017)Albert, Kaur, and Gonzalez]{albert2017using}
Adrian Albert, Jasleen Kaur, and Marta~C. Gonzalez.
\newblock Using {C}onvolutional {N}etworks and {S}atellite {I}magery to
  {I}dentify {P}atterns in {U}rban {E}nvironments at a {L}arge {S}cale.
\newblock In \emph{Proceedings of the ACM SIGKDD International Conference on
  Knowledge Discovery and Data Mining}, 2017.

\bibitem[ASHRAE(2001)]{ashrae2001fundamentals}
ASHRAE.
\newblock Fundamentals handbook.
\newblock \emph{American Society of Heating, Refrigerating, and Air-
  Conditioning Engineers}, 2001.

\bibitem[Beucher \& Meyer(2018)Beucher and Meyer]{beucher2018morphological}
Serge Beucher and Fernand Meyer.
\newblock The {M}orphological {A}pproach to {S}egmentation: {T}he {W}atershed
  {T}ransformation.
\newblock In \emph{Mathematical Morphology in Image Processing}, pp.\
  433--481. CRC Press, 2018.

\bibitem[Blanchard et~al.(2022)Blanchard, Parashar, Dodov, Lessig, and
  Sapsis]{blanchard2022multi}
Antoine Blanchard, Nishant Parashar, Boyko Dodov, Christian Lessig, and
  Themistoklis Sapsis.
\newblock A {M}ulti-{S}cale {D}eep {L}earning {F}ramework for {P}rojecting
  {W}eather {E}xtremes.
\newblock \emph{Climate Change AI Workshop at Neural Information Processing
  Systems}, 2022.

\bibitem[Briegel et~al.(2023)Briegel, Makansi, Brox, Matzarakis, and
  Christen]{briegel2023modelling}
Ferdinand Briegel, Osama Makansi, Thomas Brox, Andreas Matzarakis, and Andreas
  Christen.
\newblock Modelling long-term thermal comfort conditions in urban environments
  using a deep convolutional encoder-decoder as a computational shortcut.
\newblock \emph{Urban Climate}, 2023.

\bibitem[Ch{\`a}fer et~al.(2022)Ch{\`a}fer, Tan, Cureau, Hien, Pisello, and
  Cabeza]{chafer2022mobile}
Marta Ch{\`a}fer, Chun~Liang Tan, Roberta~Jacoby Cureau, Wong~Nyuk Hien,
  Anna~Laura Pisello, and Luisa~F Cabeza.
\newblock Mobile measurements of microclimatic variables through the central
  area of {S}ingapore: An analysis from the pedestrian perspective.
\newblock \emph{Sustainable Cities and Society}, 2022.

\bibitem[Chen et~al.(2008)Chen, Ooka, and Kato]{chen2008study}
Hong Chen, Ryozo Ooka, and Shinsuke Kato.
\newblock Study on optimum design method for pleasant outdoor thermal
  environment using genetic algorithms ({GA}) and coupled simulation of
  convection, radiation and conduction.
\newblock \emph{Building and Environment}, 2008.

\bibitem[Chen et~al.(2015)Chen, He, Benesty, Khotilovich, Tang, Cho, Chen,
  Mitchell, Cano, Zhou, et~al.]{chen2015xgboost}
Tianqi Chen, Tong He, Michael Benesty, Vadim Khotilovich, Yuan Tang, Hyunsu
  Cho, Kailong Chen, Rory Mitchell, Ignacio Cano, Tianyi Zhou, et~al.
\newblock Xgboost: e{X}treme gradient boosting.
\newblock \emph{R package version 0.4-2}, 2015.

\bibitem[{City of Freiburg}(2018)]{freiburg20183D}
{City of Freiburg}.
\newblock 3{D}-{C}ity-model {F}reiburg im {B}reisgau - {L}o{D}1, 2018.

\bibitem[{City of Freiburg}(2021)]{freiburg2021Lidar}
{City of Freiburg}.
\newblock {LIDAR} data, 2021.
\newblock Vermessungsamt.

\bibitem[Cohen et~al.(2012)Cohen, Potchter, and Matzarakis]{cohen2012daily}
Pninit Cohen, Oded Potchter, and Andreas Matzarakis.
\newblock Daily and seasonal climatic conditions of green urban open spaces in
  the mediterranean climate and their impact on human comfort.
\newblock \emph{Building and Environment}, 2012.

\bibitem[De~Abreu-Harbich et~al.(2015)De~Abreu-Harbich, Labaki, and
  Matzarakis]{de2015effect}
Loyde~Vieira De~Abreu-Harbich, Lucila~Chebel Labaki, and Andreas Matzarakis.
\newblock Effect of tree planting design and tree species on human thermal
  comfort in the tropics.
\newblock \emph{Landscape and Urban Planning}, 2015.

\bibitem[Dubey et~al.(2016)Dubey, Naik, Parikh, Raskar, and
  Hidalgo]{dubey2016deep}
Abhimanyu Dubey, Nikhil Naik, Devi Parikh, Ramesh Raskar, and C{\'e}sar~A
  Hidalgo.
\newblock Deep {L}earning the {C}ity: {Q}uantifying {U}rban {P}erception {A}t
  {A} {G}lobal {S}cale.
\newblock In \emph{Proceedings of the European Conference on Computer Vision},
  2016.

\bibitem[Gabriel \& Endlicher(2011)Gabriel and Endlicher]{gabriel2011urban}
Katharina~M.A. Gabriel and Wilfried~R. Endlicher.
\newblock Urban and rural mortality rates during heat waves in berlin and
  brandenburg, germany.
\newblock \emph{Environmental Pollution}, 2011.

\bibitem[Hersbach et~al.(2020)Hersbach, Bell, Berrisford, Hirahara,
  Hor{\'a}nyi, Mu{\~n}oz-Sabater, Nicolas, Peubey, Radu, Schepers,
  et~al.]{hersbach2020era5}
Hans Hersbach, Bill Bell, Paul Berrisford, Shoji Hirahara, Andr{\'a}s
  Hor{\'a}nyi, Joaqu{\'\i}n Mu{\~n}oz-Sabater, Julien Nicolas, Carole Peubey,
  Raluca Radu, Dinand Schepers, et~al.
\newblock The era5 global reanalysis.
\newblock \emph{Quarterly Journal of the Royal Meteorological Society}, 2020.

\bibitem[Holst \& Mayer(2011)Holst and Mayer]{holst2011impacts}
Jutta Holst and Helmut Mayer.
\newblock Impacts of street design parameters on human-biometeorological
  variables.
\newblock \emph{Meteorologische Zeitschrift}, 2011.

\bibitem[H{\"o}ppe(1992)]{hoppe1992new}
P.~H{\"o}ppe.
\newblock A new procedure to determine the mean radiant temperature outdoors.
\newblock \emph{Wetter und Leben}, 44, 1992.

\bibitem[Huang \& Hoefler(2023)Huang and Hoefler]{huang2023compressing}
Langwen Huang and Torsten Hoefler.
\newblock Compressing multidimensional weather and climate data into neural
  networks.
\newblock In \emph{International Conference on Learning Representations}, 2023.

\bibitem[Ioffe \& Szegedy(2015)Ioffe and Szegedy]{ioffe2015batch}
Sergey Ioffe and Christian Szegedy.
\newblock Batch {N}ormalization: {A}ccelerating {D}eep {N}etwork {T}raining by
  {R}educing {I}nternal {C}ovariate {S}hift.
\newblock In \emph{International Conference on Machine Learning}, 2015.

\bibitem[K{\'a}ntor \& Unger(2011)K{\'a}ntor and Unger]{kantor2011most}
No{\'e}mi K{\'a}ntor and J{\'a}nos Unger.
\newblock The most problematic variable in the course of
  human-biometeorological comfort assessment—the mean radiant temperature.
\newblock \emph{Central European Journal of Geosciences}, 2011.

\bibitem[Kingma \& Ba(2015)Kingma and Ba]{kingma2015adam}
Diederik~P. Kingma and Jimmy Ba.
\newblock Adam: A method for stochastic optimization.
\newblock \emph{International Conference on Learning Representations}, 2015.

\bibitem[Korsgaard(1949)]{korsgaard1949necessity}
Vagn Korsgaard.
\newblock Necessity of using a directional mean radiant temperature to describe
  the thermal condition in rooms.
\newblock \emph{Heating, Piping \& Air Conditioning}, 1949.

\bibitem[Lacoste et~al.(2019)Lacoste, Luccioni, Schmidt, and
  Dandres]{lacoste2019quantifying}
Alexandre Lacoste, Alexandra Luccioni, Victor Schmidt, and Thomas Dandres.
\newblock Quantifying the {C}arbon {E}missions of {M}achine {L}earning.
\newblock \emph{Climate Change AI Workshop at Neural Information Processing
  Systems}, 2019.

\bibitem[Lee et~al.(2013)Lee, Holst, Mayer, et~al.]{lee2013modification}
Hyunjung Lee, Jutta Holst, Helmut Mayer, et~al.
\newblock Modification of human-biometeorologically significant radiant flux
  densities by shading as local method to mitigate heat stress in summer within
  urban street canyons.
\newblock \emph{Advances in Meteorology}, 2013.

\bibitem[Lee et~al.(2016)Lee, Mayer, and Chen]{lee2016contribution}
Hyunjung Lee, Helmut Mayer, and Liang Chen.
\newblock Contribution of trees and grasslands to the mitigation of human heat
  stress in a residential district of {F}reiburg, {S}outhwest {G}ermany.
\newblock \emph{Landscape and Urban Planning}, 2016.

\bibitem[Lee et~al.(2020)Lee, Mayer, and Kuttler]{lee2020impact}
Hyunjung Lee, Helmut Mayer, and Wilhelm Kuttler.
\newblock Impact of the spacing between tree crowns on the mitigation of
  daytime heat stress for pedestrians inside ew urban street canyons under
  central european conditions.
\newblock \emph{Urban Forestry \& Urban Greening}, 2020.

\bibitem[Lindberg et~al.(2008)Lindberg, Holmer, and
  Thorsson]{lindberg2008solweig}
Fredrik Lindberg, Bj{\"o}rn Holmer, and Sofia Thorsson.
\newblock Solweig 1.0--modelling spatial variations of 3d radiant fluxes and
  mean radiant temperature in complex urban settings.
\newblock \emph{International Journal of Biometeorology}, 2008.

\bibitem[Lindemann-Matthies \& Brieger(2016)Lindemann-Matthies and
  Brieger]{lindemann2016does}
Petra Lindemann-Matthies and Hendrik Brieger.
\newblock Does urban gardening increase aesthetic quality of urban areas? a
  case study from germany.
\newblock \emph{Urban Forestry \& Urban Greening}, 2016.

\bibitem[Louren{\c{c}}o et~al.(2003)Louren{\c{c}}o, Martin, and
  St{\"u}tzle]{lourencco2003iterated}
Helena~R. Louren{\c{c}}o, Olivier~C. Martin, and Thomas St{\"u}tzle.
\newblock Iterated {L}ocal {S}earch.
\newblock In \emph{Handbook of {M}etaheuristics}. Springer, 2003.

\bibitem[Louren{\c{c}}o et~al.(2019)Louren{\c{c}}o, Martin, and
  St{\"u}tzle]{lourencco2019iterated}
Helena~R. Louren{\c{c}}o, Olivier~C. Martin, and Thomas St{\"u}tzle.
\newblock Iterated {L}ocal {S}earch: {F}ramework and {A}pplications.
\newblock \emph{Handbook of {M}etaheuristics}, 2019.

\bibitem[Lundberg \& Lee(2017)Lundberg and Lee]{lundberg2017unified}
Scott~M Lundberg and Su-In Lee.
\newblock A {U}nified {A}pproach to {I}nterpreting {M}odel {P}redictions.
\newblock \emph{Neural Information Processing Systems}, 2017.

\bibitem[Mayer et~al.(2008)Mayer, Holst, Dostal, Imbery, and
  Schindler]{mayer2008human}
Helmut Mayer, Jutta Holst, Paul Dostal, Florian Imbery, and Dirk Schindler.
\newblock Human thermal comfort in summer within an urban street canyon in
  central europe.
\newblock \emph{Meteorologische Zeitschrift}, 2008.

\bibitem[Nowak et~al.(2006)Nowak, Crane, and Stevens]{nowak2006air}
David~J. Nowak, Daniel~E. Crane, and Jack~C. Stevens.
\newblock Air pollution removal by urban trees and shrubs in the united states.
\newblock \emph{Urban Forestry \& Urban Greening}, 2006.

\bibitem[Ooka et~al.(2008)Ooka, Chen, and Kato]{ooka2008study}
Ryozo Ooka, Hong Chen, and Shinsuke Kato.
\newblock Study on optimum arrangement of trees for design of pleasant outdoor
  environment using multi-objective genetic algorithm and coupled simulation of
  convection, radiation and conduction.
\newblock \emph{Journal of Wind Engineering and Industrial Aerodynamics}, 2008.

\bibitem[Otness et~al.(2023)Otness, Zanna, and Bruna]{otness2023data}
Karl Otness, Laure Zanna, and Joan Bruna.
\newblock Data-driven multiscale modeling of subgrid parameterizations in
  climate models.
\newblock \emph{Climate Change AI Workshop at International Conference on
  Learning Representations}, 2023.

\bibitem[Pearl(2009)]{pearl2009causality}
Judea Pearl.
\newblock \emph{Causality}.
\newblock Cambridge university press, 2009.

\bibitem[Rolnick et~al.(2022)Rolnick, Donti, Kaack, Kochanski, Lacoste,
  Sankaran, Ross, Milojevic-Dupont, Jaques, Waldman-Brown,
  et~al.]{rolnick2022tackling}
David Rolnick, Priya~L Donti, Lynn~H Kaack, Kelly Kochanski, Alexandre Lacoste,
  Kris Sankaran, Andrew~Slavin Ross, Nikola Milojevic-Dupont, Natasha Jaques,
  Anna Waldman-Brown, et~al.
\newblock Tackling {C}limate {C}hange with {M}achine {L}earning.
\newblock \emph{ACM Computing Surveys (CSUR)}, 2022.

\bibitem[Ronneberger et~al.(2015)Ronneberger, Fischer, and
  Brox]{ronneberger2015unet}
Olaf Ronneberger, Philipp Fischer, and Thomas Brox.
\newblock U-{N}et: {C}onvolutional {N}etworks for {B}iomedical {I}mage
  {S}egmentation.
\newblock In \emph{Medical Image Computing and Computer-Assisted Intervention},
  2015.

\bibitem[Schmidt et~al.(2022)Schmidt, Luccioni, Teng, Zhang, Reynaud,
  Raghupathi, Cosne, Juraver, Vardanyan, Hern{\'a}ndez-Garc{\'\i}a, and
  Bengio]{schmidt2022climategan}
Victor Schmidt, Alexandra Luccioni, M{\'e}lisande Teng, Tianyu Zhang, Alexia
  Reynaud, Sunand Raghupathi, Gautier Cosne, Adrien Juraver, Vahe Vardanyan,
  Alex Hern{\'a}ndez-Garc{\'\i}a, and Yoshua Bengio.
\newblock Climate{GAN}: {R}aising {C}limate {C}hange {A}wareness by
  {G}enerating {I}mages of {F}loods.
\newblock In \emph{International Conference on Learning Representations}, 2022.

\bibitem[Shapley(1953)]{shapley1953value}
Lloyd~S. Shapley.
\newblock A value for n-person games.
\newblock \emph{Contributions to the Theory of Games}, 1953.

\bibitem[Shen et~al.(2020)Shen, Liu, Ren, and Zheng]{shen2020machine}
Jiaqi Shen, Chuan Liu, Yue Ren, and Hao Zheng.
\newblock Machine learning assisted urban filling.
\newblock In \emph{Proceedings of the International Conference of the
  Association for Computer-Aided Architectural Design Research in Asia}, 2020.

\bibitem[Soille \& Ansoult(1990)Soille and Ansoult]{soille1990automated}
Pierre~J. Soille and Marc~M. Ansoult.
\newblock Automated basin delineation from digital elevation models using
  mathematical morphology.
\newblock \emph{Signal Processing}, 1990.

\bibitem[Srinivas \& Patnaik(1994)Srinivas and Patnaik]{srinivas1994genetic}
Mandavilli Srinivas and Lalit~M Patnaik.
\newblock Genetic algorithms: A survey.
\newblock \emph{computer}, 1994.

\bibitem[Stojakovic et~al.(2020)Stojakovic, Bajsanski, Savic, Milosevic, and
  Tepavcevic]{stojakovic2020influence}
Vesna Stojakovic, Ivana Bajsanski, Stevan Savic, Dragan Milosevic, and Bojan
  Tepavcevic.
\newblock The influence of changing location of trees in urban green spaces on
  insolation mitigation.
\newblock \emph{Urban Forestry \& Urban Greening}, 2020.

\bibitem[Teng et~al.(2023)Teng, Elmustafa, Akera, Larochelle, and
  Rolnick]{teng2023bird}
M{\'e}lisande Teng, Amna Elmustafa, Benjamin Akera, Hugo Larochelle, and David
  Rolnick.
\newblock Bird {D}istribution {M}odelling using {R}emote {S}ensing and
  {C}itizen {S}cience data.
\newblock \emph{Climate Change AI Workshop at International Conference on
  Learning Representations}, 2023.

\bibitem[Thorsson et~al.(2014)Thorsson, Rockl{\"o}v, Konarska, Lindberg,
  Holmer, Dousset, and Rayner]{thorsson2014mean}
Sofia Thorsson, Joacim Rockl{\"o}v, Janina Konarska, Fredrik Lindberg,
  Bj{\"o}rn Holmer, B{\'e}n{\'e}dicte Dousset, and David Rayner.
\newblock Mean radiant temperature--a predictor of heat related mortality.
\newblock \emph{Urban Climate}, 2014.

\bibitem[Thorsson et~al.(2017)Thorsson, Rayner, Lindberg, Monteiro, Katzschner,
  Lau, Campe, Katzschner, Konarska, Onomura, et~al.]{thorsson2017present}
Sofia Thorsson, David Rayner, Fredrik Lindberg, Ana Monteiro, Lutz Katzschner,
  Kevin Ka-Lun Lau, Sabrina Campe, Antje Katzschner, Janina Konarska, Shiho
  Onomura, et~al.
\newblock Present and projected future mean radiant temperature for three
  european cities.
\newblock \emph{International Journal of Biometeorology}, 2017.

\bibitem[Tolan et~al.(2023)Tolan, Yang, Nosarzewski, Couairon, Vo, Brandt,
  Spore, Majumdar, Haziza, Vamaraju, et~al.]{tolan2023sub}
Jamie Tolan, Hung-I Yang, Ben Nosarzewski, Guillaume Couairon, Huy Vo, John
  Brandt, Justine Spore, Sayantan Majumdar, Daniel Haziza, Janaki Vamaraju,
  et~al.
\newblock Sub-meter resolution canopy height maps using self-supervised
  learning and a vision transformer trained on {A}erial and {GEDI} lidar.
\newblock \emph{arXiv}, 2023.

\bibitem[{United Nations}(2023)]{united2023causes}
{United Nations}.
\newblock {C}auses and {E}ffects of {C}limate {C}hange, 2023.
\newblock URL
  \url{https://www.un.org/en/climatechange/science/causes-effects-climate-change}.
\newblock [Online; accessed 20-September-2023].

\bibitem[Wallenberg et~al.(2022)Wallenberg, Lindberg, and
  Rayner]{wallenberg2022locating}
Nils Wallenberg, Fredrik Lindberg, and David Rayner.
\newblock Locating trees to mitigate outdoor radiant load of humans in urban
  areas using a metaheuristic hill-climbing algorithm--introducing treeplanter
  v1. 0.
\newblock \emph{Geoscientific Model Development}, 2022.

\bibitem[Wang et~al.(2020)Wang, Fu, Wang, Huang, and Lu]{wang2020reimagining}
Dongjie Wang, Yanjie Fu, Pengyang Wang, Bo~Huang, and Chang-Tien Lu.
\newblock Reimagining {C}ity {C}onfiguration: {A}utomated {U}rban {P}lanning
  via {A}dversarial {L}earning.
\newblock In \emph{Proceedings of the International Conference on Advances in
  Geographic Information Systems}, 2020.

\bibitem[Wang et~al.(2021)Wang, Liu, Johnson, Sun, Du, and Fu]{wang2021deep}
Dongjie Wang, Kunpeng Liu, Pauline Johnson, Leilei Sun, Bowen Du, and Yanjie
  Fu.
\newblock Deep {H}uman-guided {C}onditional {V}ariational {G}enerative
  {M}odeling for {A}utomated {U}rban {P}lanning.
\newblock In \emph{IEEE International Conference on Data Mining}, 2021.

\bibitem[Wang et~al.(2023)Wang, Wu, Zhang, Zhou, Sun, and Fu]{wang2023human}
Dongjie Wang, Lingfei Wu, Denghui Zhang, Jingbo Zhou, Leilei Sun, and Yanjie
  Fu.
\newblock Human-instructed {D}eep {H}ierarchical {G}enerative {L}earning for
  {A}utomated {U}rban {P}lanning.
\newblock In \emph{Proceedings of the Conference on Artificial Intelligence},
  2023.

\bibitem[Zhao et~al.(2017)Zhao, Wentz, and Murray]{zhao2017tree}
Qunshan Zhao, Elizabeth~A. Wentz, and Alan~T. Murray.
\newblock Tree shade coverage optimization in an urban residential environment.
\newblock \emph{Building and Environment}, 2017.

\bibitem[Zhao et~al.(2018)Zhao, Sailor, and Wentz]{zhao2018impact}
Qunshan Zhao, David~J. Sailor, and Elizabeth~A. Wentz.
\newblock Impact of tree locations and arrangements on outdoor microclimates
  and human thermal comfort in an urban residential environment.
\newblock \emph{Urban Forestry \& Urban Greening}, 2018.

\bibitem[Zheng et~al.(2023)Zheng, Lin, Zhao, Wu, Jin, and Li]{zheng2023spatial}
Yu~Zheng, Yuming Lin, Liang Zhao, Tinghai Wu, Depeng Jin, and Yong Li.
\newblock Spatial planning of urban communities via deep reinforcement
  learning.
\newblock \emph{Nature Computational Science}, 2023.

\end{thebibliography}
}
\bibliographystyle{iclr2024_conference}

\appendix
\newpage

\section{Mean radiant temperature}\label{app:tmrt}
The mean radiant temperature  \tmrt~[°C] is a driving meteorological parameter for assessing the radiation load on humans. During the day, it is of particular importance in determining human outdoor thermal comfort. \tmrt~ varies spatially, \eg, standing in direct sunlight on a hot day results in a less favorable thermal experience for the human body than seeking shelter in shaded areas.
\tmrt~is defined as the ``uniform temperature of an imaginary enclosure in which radiant heat transfer from the human body equals the radiant heat transfer in the actual non-uniform enclosure`` by \citet{ashrae2001fundamentals}. That is, \tmrt~can be calculated by measured values of surrounding objects and their position \wrt the person. Formally, \tmrt~can be computed by
\begin{equation}
    \text{T}^4_{\text{mrt}} = \sum\limits_{i=1}^N T_i^4 F_{p-i} \qquad,
\end{equation}
where \begin{math}T_i\end{math} is the surface temperature of the \begin{math}i\end{math}-th surface and \begin{math}F_{p-i}\end{math} is the angular factor between a person and the \begin{math}i\end{math}-th surface \citep{ashrae2001fundamentals}.
Alternatively, we can use the six-directional approach of \citet{hoppe1992new} through estimation of short- and longwave radiation fluxes of six directions (upward, downward, and the four cardinal directions), as follows:
\begin{equation}
    \text{T}_{\text{mrt}} = \frac{0.08(T_{p}^{\text{up}}+T_{p}^{\text{down}})+0.23(T_{p}^{\text{left}}+T_{p}^{\text{right}})+0.35(T_{p}^{\text{front}}+T_{p}^{\text{back}})}{2(0.08+0.23+0.35)} \qquad ,
\end{equation}
where \begin{math}T_{pr}\end{math} is the plane radiant temperature \citep{korsgaard1949necessity}.

\section{Spatial and meteorological input data}\label{sec:data_further}
To predict point-wise \tmrt~ we use the following spatial inputs:
\begin{itemize}
    \item Digital elevation model [m]: representation of elevation data of terrain excluding surface objects.
    \item Digital surface model with heights of ground and buildings [m]: heights of ground and buildings above sea level.
    \item Digital surface model with heights of vegetation [m]: heights of vegetation above ground level.
    \item Land cover class map [$\{$paved, building, grass, bare soil, water$\}$]: specifies the land-usage.
    \item Wall aspect [°]: aspect of walls where a north-facing wall has a value of zero.
    \item Wall height [m]: specifies the height of a wall of a building.
    \item Sky view factor maps [\%]: cosine-corrected proportion of the visible sky hemisphere from a specific location from earth's surface by the total solid angle of the entire sky hemisphere.
\end{itemize}
\figref{fig:spatial_inputs} shows exemplar spatial inputs and \tabref{tab:meteo_inputs} provides exemplar temporal (meteorological) inputs. Note that the model \begin{math}f_{\tmrt}\end{math} requires both spatial and temporal (meteorological) inputs, whereas our proposed model \aggregatedmodel~only requires spatial inputs, as it directly outputs aggregated, point-wise \aggregatedtmrt~for a specified aggregation function \begin{math}\phi\end{math} and time period \begin{math}M\end{math} with its respective meteorological inputs.
\begin{figure}[t]
    \centering
    \begin{subfigure}[t]{0.18\linewidth}
         \centering
         \includegraphics[width=\linewidth]{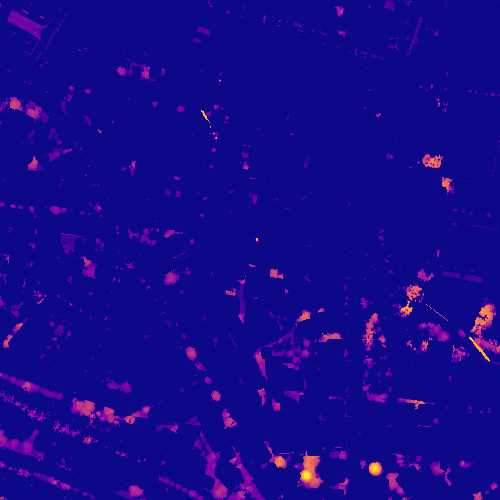}
         \caption{Digital surface model for vegetation [m].}
     \end{subfigure}\hfill
     \begin{subfigure}[t]{0.18\linewidth}
         \centering
         \includegraphics[width=\linewidth]{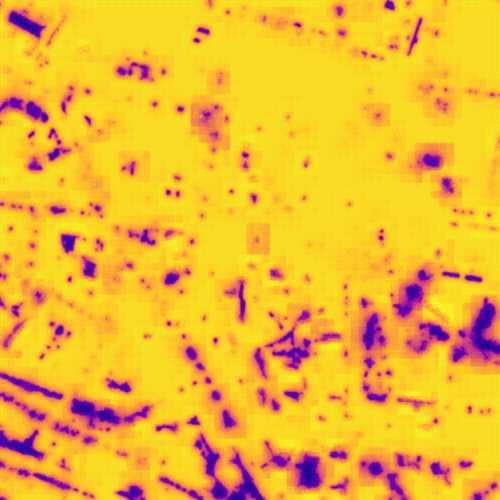}
         \caption{Sky view factor map for vegetation [\%].}
     \end{subfigure}\hfill
     \begin{subfigure}[t]{0.18\linewidth}
         \centering
         \includegraphics[width=\linewidth]{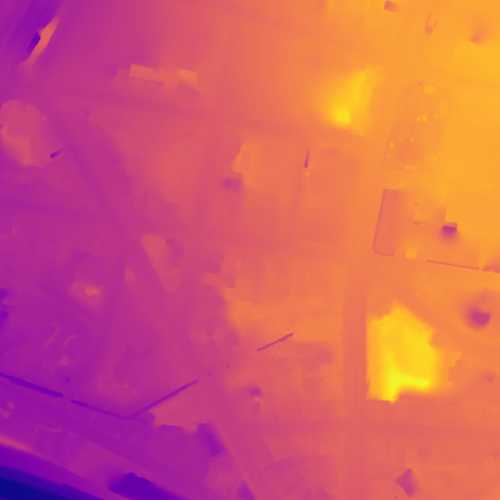}
         \caption{Digital elevation model [m].}
     \end{subfigure}\hfill
     \begin{subfigure}[t]{0.18\linewidth}
         \centering
         \includegraphics[width=\linewidth]{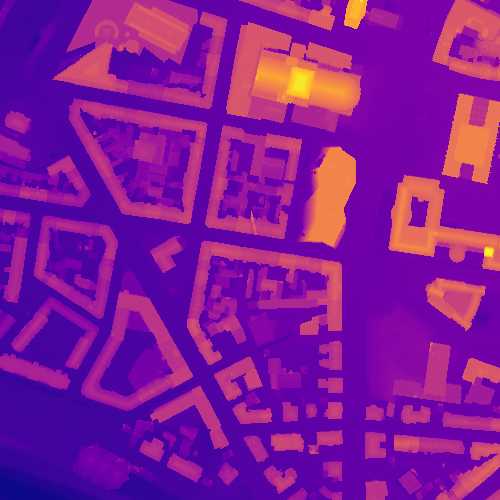}
         \caption{Digital surface model for ground and buildings [m].}
     \end{subfigure}\hfill
     \begin{subfigure}[t]{0.18\linewidth}
         \centering
         \includegraphics[width=\linewidth]{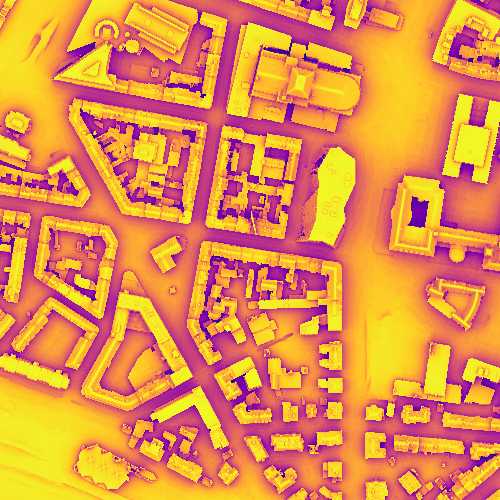}
         \caption{Sky view factor map for ground and buildings [\%].}
     \end{subfigure}\hfill
     \begin{subfigure}[t]{0.18\linewidth}
         \centering
         \includegraphics[width=\linewidth]{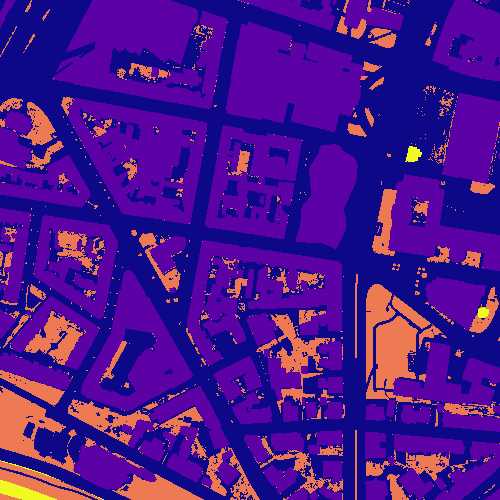}
         \caption{Land cover class map [$\{$paved, building, grass, bare soil, water$\}$].}
     \end{subfigure}\hspace{0.5cm}
     \begin{subfigure}[t]{0.18\linewidth}
         \centering
         \includegraphics[width=\linewidth]{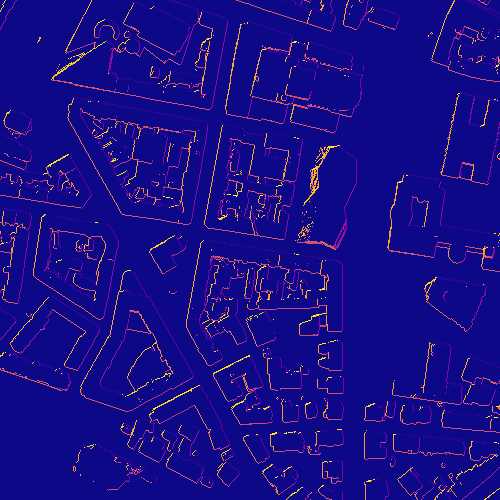}
         \caption{Wall aspect [°].}
     \end{subfigure}\hspace{0.5cm}
     \begin{subfigure}[t]{0.18\linewidth}
         \centering
         \includegraphics[width=\linewidth]{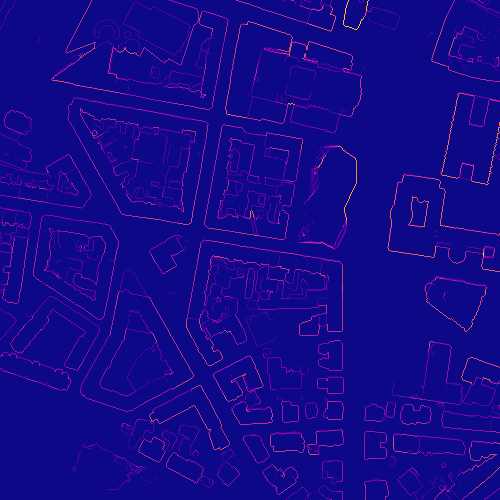}
         \caption{Wall height [m].}
     \end{subfigure}
    \caption{Exemplar spatial inputs. Note that we omit sky view factor maps for vegetation or ground and building for the four cardinal directions (north, east, south, west) for visualization.}
    \label{fig:spatial_inputs}
\end{figure}
\begin{table}[h]
    \centering
    \caption{Exemplar meteorological inputs.}
    \label{tab:meteo_inputs}
    \resizebox{\linewidth}{!}{%
    \begin{tabular}{l|ccccccccc}
        \toprule
        Date \& time & \begin{tabular}{@{}c@{}}Air temper- \\ ature {[°C]}\end{tabular} & \begin{tabular}{@{}c@{}}Wind \\ speed {[ms\textsuperscript{-1}]}\end{tabular} & \begin{tabular}{@{}c@{}}Wind di- \\ rection {[°]}\end{tabular} & \begin{tabular}{@{}c@{}}Incoming shortwave \\ radiation {[Wm\textsuperscript{-2}]}\end{tabular} & \begin{tabular}{@{}c@{}}Precipi- \\ tation {[mm]}\end{tabular} & \begin{tabular}{@{}c@{}}Relative \\ humidity {[\%]}\end{tabular} & \begin{tabular}{@{}c@{}}Barometric \\ pressure {[kPA]}\end{tabular} & \begin{tabular}{@{}c@{}}Elevation \\ angle {[°]}\end{tabular} & \begin{tabular}{@{}c@{}}Azimuth \\ angle {[°]}\end{tabular} \\\midrule
        2020-01-01 00:00:00 & -1.31 & 1.97 & 107.72 & 0.0 & 0.0 & 75.88 & 976.4 & 0.0 & 343.009 \\
2020-03-20 06:00:00 & 9.07 & 0.64 & 114.71 & 0.0 & 0.0 & 83.46 & 964.7 & 0.0 & 83.264 \\
2020-06-20 11:00:00 & 18.43 & 1.15 & 314.46 & 652.27 & 0.02 & 68.04 & 966.2 & 59.601 & 135.899 \\
2020-09-22 16:00:00 & 22.35 & 1.43 & 245.3 & 310.98 & 0.04 & 56.1 & 955.1 & 22.754 & 242.247 \\
2020-12-21 21:00:00 & 7.19 & 4.25 & 202.69 & 0.0 & 0.51 & 87.82 & 963.0 & 0.0 & 282.195 \\\bottomrule
    \end{tabular}
    }
\end{table}

\section{Mean radiant temperature modeling with SOLWEIG}\label{sec:solweig}
SOLWEIG \citep{lindberg2008solweig} uses spatial and temporal (meteorological) inputs to model \tmrt~for a height of \SI{1.1}{\meter} of a standing or walking rotationally symmetric person using the six-dimension approach presented in \appendixref{app:tmrt}. We used the default model parameters:
\begin{itemize}
    \item Emissivity ground: 0.95.
    \item Emissivity walls: 0.9.
    \item Albedo ground: 0.15.
    \item Albedo walls: 0.2.
    \item Transmissivity: \SI{3}{\percent}.
    \item Trunk height: \SI{25}{\percent} of tree height.
\end{itemize}

\section{Hyperparameter choices of optimzation}\label{sec:opt_hyperparameters}
We implemented the genetic algorithm with PyGAD\footnote{\url{https://github.com/ahmedfgad/GeneticAlgorithmPython}}. Throughout our experiments, we used a population size of 20 with steady-state selection for parents, random mutation and single-point crossover. We used the current best optima (up to five) and random samples for the initial population. We set the temperature \begin{math}\tau\end{math} of \eqref{eq:softmax} to 1.
We kept the best solution throughout the evolution. We used 1000 iterations within our optimization method.
For the baseline \texttt{genetic} algorithm, we used 5000 iterations to account for larger compute due to the iterative design of our optimization approach.

For the \texttt{HillClimbing} algorithm, we adopted the design by \citet{wallenberg2022locating}. That is, we repeatedly cycle over all trees and try to move them within the adjacent eight neighbors. We accept the move if it improves upon the current aggregated, point-wise \aggregatedtmrt. We repeat this process until no further improvement can be found.

Lastly, we used five iterations within our iterated local search. We found this resulted in a good trade-off between the efficacy of the final tree placements and total runtime.

\section{Model and training details}\label{sec:modeling_details}
\paragraph{Model details}
We adopted the U-Net architecture from \citet{briegel2023modelling}. Specifically, the models \begin{math}f_{\tmrt}\end{math} and \aggregatedmodel~receive inputs of size \begin{math}16\times h\times w\end{math} and predict \tmrt~or \aggregatedtmrt, respectively, 
of size of \begin{math}h\times w\end{math}, where \begin{math}h\end{math} and \begin{math}w\end{math} are the height and width of the spatial input, respectively. 
The model \begin{math}f_{\mathrm{svf}}\end{math} receives an input of size \begin{math}h\times w\end{math} (digital surface model for vegetation) and outputs the sky view factor maps for vegetation of size of \begin{math}5\times h\times w\end{math}. All models use the U-Net architecture \citep{ronneberger2015unet} with a depth of 3 and base dimensionality of 64. Each stage of the encoder and decoder consist of a convolution or transposed convolution, respectively, followed by batch normalization \citep{ioffe2015batch} and ReLU non-linearity.

\paragraph{Training details}
In the following, we provide the specific training details of all models.
\begin{itemize}
    \item $f_{\tmrt}$: We trained the model with L1 loss function for ten epochs using the Adam optimizer \citep{kingma2015adam} with learning rate of 0.001 and exponential learning rate decay schedule. We randomly cropped (256x256) the inputs during training.
    \item \aggregatedmodel: We trained the model with L1 loss function for 5000 epochs and batch size of 32 with Adam optimizer \citep{kingma2015adam} with learning rate of 0.001 and exponential decay learning rate schedule. We randomly cropped (256x256) the inputs during training.
    \item $f_{\mathrm{svf}}$: We trained the model with L1 loss function for 20 epochs with Adam optimizer \citep{kingma2015adam} with learning rate of 0.001 and exponential decay learning rate schedule. We randomly cropped (256x256) the inputs during training.
\end{itemize}

\section{Supplementary experimental results}
\figref{fig:counterfactual_veg} visualizes the change in vegetation for our experiments from \secref{sub:counterfactual}. We observe that trees were moved from north-to-south to west-to-east aligned streets as well as to large open, typically sealed spaces, such as sealed plazas or parking lots.
\begin{figure}
    \centering
     \includegraphics[width=\linewidth,trim=0 8 0 5,clip]{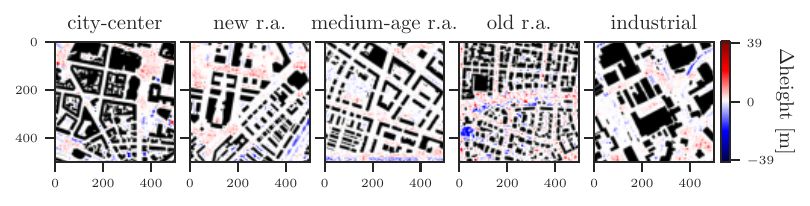}
    \caption{The alternative tree placements (\secref{sub:counterfactual}) often move from north-to-south streets to west-to-east streets as well as open, typically paved spaces. Red indicates that vegetation (\ie, tree) was added, whereas blue indicates that vegetation was removed.}
    \label{fig:counterfactual_veg}
\end{figure}

\section{Carbon footprint estimate}
All components of our optimization approach cause carbon dioxide emissions.
We estimated the emissions for our final experiments (including training of models, optimization, ablations \etc) using the calculator by \citet{lacoste2019quantifying}.\footnote{\url{https://mlco2.github.io/impact/}}
Experiments were conducted using an internal infrastructure, which has a carbon efficiency of \SI{\carbonefficiency}{\kilogram CO_{\textup{2}}eq/\kilo\watt\hour}.\footnote{\ifboolexpr{togl {iclrsubmission}}{The global carbon efficiency figure will be replaced by that of the country from 2022 upon acceptance.} Note that carbon efficiency figures for 2023 were not released at time of writing.} 
A cumulative of ca. \SI{380}{\hour} of computation was performed on various GPU hardware. 
Emissions are estimated to be ca. \SI{\carbonestimation}{\kilogram CO_{\textup{2}}eq}.
Note that the actual carbon emissions over the course of this project is multiple times larger.

\end{document}